\documentclass[11pt]{article}

\usepackage[utf8]{inputenc}
\usepackage[T1]{fontenc}
\usepackage{amsmath,amssymb,amsfonts,amsthm}
\usepackage{algorithmic}
\usepackage{algorithm}
\usepackage{graphicx}
\usepackage{booktabs}
\usepackage{hyperref}
\usepackage{xcolor}
\usepackage{natbib}
\usepackage{tikz}
\usetikzlibrary{shapes,arrows,positioning,fit,backgrounds,calc}
\usepackage{subcaption}
\usepackage{tcolorbox}
\usepackage[margin=1in]{geometry}
\usepackage{enumitem}

\newtheorem{theorem}{Theorem}
\newtheorem{corollary}{Corollary}
\newtheorem{proposition}{Proposition}

\newcommand{\method}{\textsc{SleepGate}}
\newcommand{\R}{\mathbb{R}}
\newcommand{\E}{\mathbb{E}}

\newcommand{\cC}{\mathcal{C}}
\newcommand{\cL}{\mathcal{L}}
\newcommand{\cS}{\mathcal{S}}
\newcommand{\cM}{\mathcal{M}}

\title{%
  \textbf{Learning to Forget: Sleep-Inspired Memory Consolidation for Resolving Proactive Interference in Large Language Models}
}

\author{%
  \textbf{Ying Xie} \\
  Kennesaw State University \\
  \texttt{yxie2@kennesaw.edu}
}

\date{}

\begin{document}
\maketitle

\begin{abstract}
Large language models (LLMs) suffer from \emph{proactive interference} (PI):
previously processed but now-outdated information in the context window
disrupts retrieval of current, relevant values. Recent work demonstrates that
this interference degrades retrieval accuracy log-linearly toward chance as
stale associations accumulate---a working-memory bottleneck that persists
regardless of context length and resists prompt-engineering mitigations.
Biological brains face an analogous challenge and resolve it through
sleep-dependent memory consolidation: an active, multi-stage process of
synaptic downscaling, selective replay, and targeted forgetting.

We propose \method{}, a biologically inspired framework that augments
transformer-based LLMs with a learned \emph{sleep cycle} operating over the
key-value (KV) cache. \method{} introduces three coordinated mechanisms:
(1)~a \emph{conflict-aware temporal tagger} that detects when new entries
supersede old ones; (2)~a lightweight \emph{forgetting gate} network trained
to selectively evict or compress stale cache entries; and (3)~a
\emph{consolidation module} that merges related surviving entries into compact
summary representations. These components are activated periodically during
inference in ``sleep micro-cycles,'' governed by an adaptive trigger based on
attention entropy. We formalize a dual-phase training objective that jointly
optimizes standard language modeling during the wake phase and
post-consolidation retrieval accuracy during the sleep phase, with explicit
pressure toward cache compression. We provide theoretical analysis showing
that \method{} can reduce the effective interference horizon from $O(n)$ to
$O(\log n)$ under mild assumptions, present a concrete algorithmic
specification, and provide preliminary experimental validation on a controlled
PI-LLM benchmark. In proof-of-concept experiments with a small-scale transformer
(4 layers, 793K parameters), \method{} achieves 99.5\% retrieval accuracy at
PI depth~5 and 97.0\% at depth~10, while all five baselines---full KV cache,
sliding window, H\textsubscript{2}O, StreamingLLM, and a decay-only
ablation---remain below 18\% across all depths. Our framework offers an
architecture-level solution to a limitation that prompt
engineering cannot address.
\end{abstract}

\vspace{0.5em}
\noindent\textbf{Keywords:} large language models, proactive interference,
working memory, memory consolidation, KV cache management, active forgetting,
sleep-inspired computation

\section{Introduction}
\label{sec:intro}

The dominant paradigm for extending the memory capacity of large language
models has been to increase context window size substantially in successive
generations of models~\citep{radford2019language, anthropic2024claude,
openai2023gpt4turbo, google2024gemini}.
The implicit assumption is that if a model can \emph{attend} to more tokens,
it can \emph{use} more information. However, accumulating evidence
suggests that this assumption is flawed. Models exhibit systematic retrieval
failures even when target information lies well within the context
window~\citep{liu2024lost, hsieh2024ruler}, and performance degrades not just
with distance but with the \emph{amount of competing information}.

\citet{wang2025unable} recently provided a clear demonstration of this
phenomenon through the lens of \emph{proactive interference} (PI), a
well-established construct in cognitive psychology~\citep{underwood1957interference,
wickens1970encoding}. In their PI-LLM paradigm, models receive a stream of
semantically related key-value pairs where later entries overwrite earlier
ones, and are then queried on the final (most recent) value. Despite the
target answer being positioned immediately before the query, retrieval
accuracy declines log-linearly toward zero as the number of prior
(now-superseded) associations increases. The errors are not
random---models systematically retrieve \emph{overwritten} values,
demonstrating that stale information actively competes with and suppresses
current information. Prompt-engineering interventions (e.g., instructing the
model to ignore prior values) provide only marginal relief.

This finding reveals a \emph{working memory bottleneck} that is fundamentally
distinct from context length limitations. The model can see the relevant tokens---the difficulty is
\emph{suppressing} the irrelevant ones. In the standard transformer attention
mechanism, every entry in the key-value cache participates in every attention
computation with no mechanism for selective inhibition. As stale entries
accumulate, they collectively drown out the signal from current, relevant
entries through sheer numerical mass.

\subsection{The Biological Precedent: Sleep as Active Forgetting}
\label{sec:intro_bio}

Biological neural systems face the same challenge. During waking
hours, the brain continuously encodes new associations, many of which
conflict with or supersede earlier ones. Left unchecked, this accumulation
would produce catastrophic interference~\citep{mccloskey1989catastrophic,
french1999catastrophic}. The brain's solution is \emph{sleep}---not as
passive downtime, but as an active computational process dedicated to memory
management.

The synaptic homeostasis hypothesis (SHY)~\citep{tononi2006sleep,
tononi2014sleep} proposes that waking experience produces a net increase in
synaptic strength, and that sleep restores homeostasis through global
synaptic downscaling: all synapses are proportionally weakened, preserving
relative strength differences while reducing absolute levels. This prevents
saturation and improves signal-to-noise ratios.

Complementing this global process, the brain engages in \emph{selective
memory replay} during sleep, particularly during slow-wave sleep
(SWS)~\citep{diekelmann2010memory, rasch2013sleep}. The hippocampus replays
recently encoded experiences to the neocortex, promoting consolidation of
important memories while allowing non-replayed traces to decay. Sleep has been shown to reduce vulnerability to associative interference and
protect recently formed memories~\citep{ellenbogen2006human}, consistent with
its broader role in memory consolidation and plasticity
\citep{abel2013sleep}.

Active forgetting mechanisms further supplement these processes.
Neuromodulatory and circuit-level processes can contribute to the weakening of
memory traces and adaptive forgetting~\citep{berry2014sleep, davis2017biology},
while sleep-specific neural oscillations such as spindles and sharp-wave
ripples help coordinate memory consolidation-related information transfer
during sleep~\citep{staresina2015hierarchical}.

\subsection{Our Proposal}
\label{sec:intro_proposal}

We propose to endow LLMs with an analogous capability: a learned, periodic
\emph{sleep cycle} that actively manages the key-value cache.
Our framework, \method{}, operates at the architectural level---modifying how
the model maintains its working memory instead of relying on
prompting or post-hoc filtering.

The contributions of this paper are:
\begin{enumerate}[leftmargin=*]
  \item \textbf{A biologically grounded framework} for active KV cache
    management that maps the three core mechanisms of sleep-dependent memory
    consolidation (synaptic downscaling, selective replay, active forgetting)
    onto concrete computational modules (\S\ref{sec:method}).
  \item \textbf{A formal dual-phase training objective} that jointly optimizes
    language modeling performance and post-consolidation retrieval accuracy,
    with explicit compression pressure (\S\ref{sec:training}).
  \item \textbf{Theoretical analysis} showing that the proposed mechanism can
    reduce the effective PI horizon from linear to logarithmic in the number
    of superseding updates (\S\ref{sec:theory}).
  \item \textbf{Preliminary experimental validation} on a controlled PI-LLM
    benchmark demonstrating that \method{} outperforms all
    baselines by a wide margin across seven PI depths, with analysis of failure modes at
    extreme interference levels
    (\S\ref{sec:experiments}, \S\ref{sec:results}).
\end{enumerate}

\section{Related Work}
\label{sec:related}

\paragraph{Context Window Limitations and the Lost-in-the-Middle Effect.}
\citet{liu2024lost} demonstrated that LLMs struggle to retrieve information
from the middle of long contexts, even when the information is present.
Subsequent work on long-context benchmarks~\citep{hsieh2024ruler,
li2024longbench} has shown that scaling context length does not
proportionally improve downstream task performance. These findings point to the need
for mechanisms beyond simple context extension.

\paragraph{Proactive Interference in Human Cognition.}
PI is one of the most robust phenomena in memory
research~\citep{underwood1957interference}. The buildup-release paradigm
of~\citet{wickens1970encoding} showed that PI accumulates with similar
material and is released by categorical shifts. \citet{kane2000role} linked PI susceptibility to working memory capacity, and \citet{ellenbogen2006human} demonstrated that sleep reduces susceptibility to
associative interference and protects recently encoded memories.

\paragraph{Proactive Interference in LLMs.}
\citet{wang2025unable} introduced the PI-LLM paradigm and demonstrated
log-linear accuracy degradation under PI across multiple model families.
Their work established that this is an architectural bottleneck, not a
training data or prompting issue. Our work builds directly on this finding by
proposing an architectural solution.

\paragraph{KV Cache Optimization.}
Several lines of work address KV cache efficiency. Local or sliding-window
attention~\citep{beltagy2020longformer} limits attention to a fixed window but
discards information indiscriminately. Other sparse attention mechanisms use
structured patterns to reduce attention cost~\citep{child2019generating}.
H\textsubscript{2}O~\citep{zhang2023h2o} retains ``heavy hitter'' tokens
based on cumulative attention scores. StreamingLLM~\citep{xiao2024efficient}
maintains attention sinks plus a sliding window. \citet{ge2024model} propose
model-driven cache compression. These approaches optimize for
\emph{efficiency} (reducing memory footprint) rather than \emph{interference
resolution} (selectively removing stale information that degrades accuracy).
\method{} addresses the latter while naturally achieving the former as a
side effect.

\paragraph{Memory-Augmented Architectures.}
External memory systems~\citep{graves2014neural, sukhbaatar2015end,
wu2022memorizing} and retrieval-augmented generation~\citep{lewis2020retrieval,
borgeaud2022improving} extend model memory beyond the context window.
However, these systems typically \emph{add} memory capacity without
addressing the interference problem within the existing context.
\citet{munkhdalai2024leave} propose leaving context behind via a
compressive memory, which is complementary to our approach.

\paragraph{Biological Inspiration in Machine Learning.}
Sleep-inspired computation has been explored in continual learning \citep{tadros2022sleep, gonzalez2020can}, where "sleep" phases involving generative replay help consolidate knowledge across tasks. \citet{kumaran2016learning} revisit complementary learning systems theory, showing how hippocampal--neocortical interactions can mitigate interference. Our work differs in targeting \emph{in-context} interference during inference rather than \emph{cross-task} interference during training.

\paragraph{Forgetting Mechanisms.}
Sparse attention~\citep{tay2022efficient}, gated retention~\citep{sun2023retentive},
and state-space models~\citep{gu2023mamba} implicitly implement forms of
forgetting through their inductive biases. However, these are fixed
architectural choices rather than learned, content-dependent forgetting
policies. The forgetting gate in LSTM~\citep{hochreiter1997long} is the
closest architectural precedent to our proposal, but operates at the
hidden-state level within a single timestep, not over an extended KV
cache.

\section{The \method{} Framework}
\label{sec:method}

We describe \method{} in detail. The framework augments a standard
transformer-based LLM with three modules that operate over the KV cache,
orchestrated by an adaptive scheduling mechanism. Figure~\ref{fig:architecture}
provides an overview.

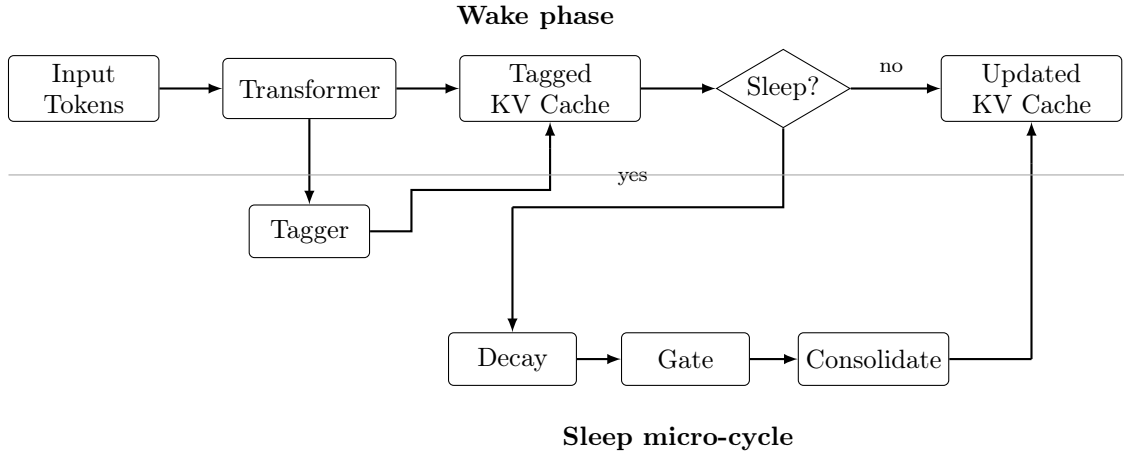
\begin{figure}[t]
\centering
\begin{tikzpicture}[
    font=\small,
    box/.style={
        rectangle,
        draw,
        rounded corners=2pt,
        minimum height=8mm,
        minimum width=18mm,
        align=center,
        inner sep=3pt
    },
    smallbox/.style={
        rectangle,
        draw,
        rounded corners=2pt,
        minimum height=7mm,
        minimum width=15mm,
        align=center,
        inner sep=2pt
    },
    decision/.style={
        diamond,
        draw,
        aspect=1.7,
        align=center,
        inner sep=1.5pt
    },
    phaselabel/.style={font=\bfseries\small},
    yn/.style={font=\scriptsize}
]

\node[box, minimum width=20mm] (input) at (0,0) {Input\\Tokens};
\node[box, minimum width=23mm] (transformer) at (3.0,0) {Transformer};
\node[box, minimum width=24mm] (cache) at (6.2,0) {Tagged\\KV Cache};
\node[decision, minimum width=15mm, minimum height=9mm] (trigger) at (9.3,0) {Sleep?};
\node[box, minimum width=24mm] (output) at (12.6,0) {Updated\\KV Cache};

\node[smallbox, minimum width=16mm] (tagger) at (3.0,-1.9) {Tagger};
\node[smallbox, minimum width=17mm] (decay) at (5.7,-3.6) {Decay};
\node[smallbox, minimum width=17mm] (gate) at (8.0,-3.6) {Gate};
\node[smallbox, minimum width=20mm] (cons) at (10.5,-3.6) {Consolidate};

\draw[thick,->,>=latex] (input.east) -- (transformer.west);
\draw[thick,->,>=latex] (transformer.east) -- (cache.west);
\draw[thick,->,>=latex] (cache.east) -- (trigger.west);
\draw[thick,->,>=latex] (trigger.east) -- (output.west);
\node[yn] at ($(trigger.east)!0.45!(output.west)+(0,0.28)$) {no};

\draw[thick,->,>=latex] (transformer.south) -- (tagger.north);

\coordinate (tagA) at ($(tagger.east)+(0.55,0)$);
\coordinate (tagB) at ($(tagA)+(0,0.55)$);
\coordinate (tagC) at ($(cache.south |- tagB)$);
\coordinate (tagD) at ($(cache.south)+(0,-0.35)$);

\draw[thick] (tagger.east) -- (tagA) -- (tagB) -- (tagC) -- (tagD);
\draw[thick,->,>=latex] (tagD) -- (cache.south);

\coordinate (sleepturn) at ($(trigger.south)+(0,-1.05)+(-3.6,0)$);
\draw[thick] (trigger.south) -- ++(0,-1.05) -- (sleepturn);
\draw[thick,->,>=latex] (sleepturn) -- (decay.north);
\node[yn] at (7.3,-1.2) {yes};

\draw[thick,->,>=latex] (decay.east) -- (gate.west);
\draw[thick,->,>=latex] (gate.east) -- (cons.west);

\coordinate (stub) at ($(output.south)+(0,-0.55)$);
\draw[thick] (cons.east) -- (12.6,-3.6);
\draw[thick] (12.6,-3.6) -- (stub);
\draw[thick,->,>=latex] (stub) -- (output.south);

\node[phaselabel] at (6.0,0.95) {Wake phase};
\node[phaselabel] at (7.9,-4.65) {Sleep micro-cycle};

\draw[black!35] (-1.0,-1.15) -- (13.9,-1.15);

\end{tikzpicture}
\caption{Simplified overview of \method{}. During the wake phase, the model
processes input tokens while maintaining a tagged KV cache. A sleep trigger
decides whether to continue normal inference or enter a sleep micro-cycle. In
the soft biasing variant used in our experiments, the cache passes through
decay and the forgetting gate, and the resulting retention scores are converted
to additive attention biases (Eq.~\ref{eq:soft_bias}) for a second forward
pass. In the hard eviction variant, entries additionally pass through
consolidation before eviction.}
\label{fig:architecture}
\end{figure}

\subsection{Preliminaries and Notation}
\label{sec:preliminaries}

Consider a transformer model processing a sequence of tokens
$\mathbf{x} = (x_1, \ldots, x_T)$. At each layer $\ell$ and head $h$, the
model maintains a KV cache $\cC^{\ell,h} = \{(\mathbf{k}_i, \mathbf{v}_i)\}_{i=1}^{t}$
where $\mathbf{k}_i, \mathbf{v}_i \in \R^d$ are the key and value vectors
for position $i$, and $t \leq T$ is the current position. The attention
output for query $\mathbf{q}_t$ is:
\begin{equation}
  \text{Attn}(\mathbf{q}_t, \cC) = \sum_{i=1}^{t}
  \frac{\exp(\mathbf{q}_t^\top \mathbf{k}_i / \sqrt{d})}
       {\sum_{j=1}^{t} \exp(\mathbf{q}_t^\top \mathbf{k}_j / \sqrt{d})}
  \mathbf{v}_i
  \label{eq:attention}
\end{equation}

In the PI setting of~\citet{wang2025unable}, the sequence contains $n$
updates to the same semantic key, each with a different value:
$(k, v_1), (k, v_2), \ldots, (k, v_n)$, and the model is queried for $v_n$.
All prior entries $(k, v_1), \ldots, (k, v_{n-1})$ are \emph{stale} and
constitute interference.

\subsection{Module 1: Conflict-Aware Temporal Tagger}
\label{sec:tagger}

The first module augments each KV cache entry with metadata that enables
downstream conflict detection and staleness identification. We extend the
standard cache to an \emph{augmented cache}:
\begin{equation}
  \cC^+ = \left\{ \left(\mathbf{k}_i, \mathbf{v}_i, \tau_i,
  \mathbf{s}_i, \sigma_i, a_i \right) \right\}_{i=1}^{t}
  \label{eq:augmented_cache}
\end{equation}
where $\tau_i \in \mathbb{N}$ is the position timestamp, $\mathbf{s}_i \in \R^{d_s}$
is a \emph{semantic signature} vector, $\sigma_i \in \{0, 1\}$ is a
binary \emph{superseded flag}, and $a_i \in \mathbb{R}_{\ge 0}$ is the
cumulative attention received by entry $i$.

\paragraph{Semantic Signatures.}
The semantic signature $\mathbf{s}_i$ captures what ``slot'' or ``entity''
the entry refers to, abstracting away the specific value. We compute it via a
lightweight projection of the key vector combined with local context:
\begin{equation}
  \mathbf{s}_i = \text{LayerNorm}\!\left(
    W_s \left[\mathbf{k}_i \,\|\, 
    \text{LocalPool}\big(\{\mathbf{k}_j\}_{j=\max(1,i-w)}^{\min(t,i+w)}\big)\right]
  \right)
  \label{eq:semantic_sig}
\end{equation}
where $W_s \in \R^{d_s \times (d + d)}$ is a learned projection,
$\text{LocalPool}$ averages keys in a window of size $2w+1$, and $\|\,$
denotes concatenation.

\paragraph{Conflict Detection.}
An entry $i$ is marked as potentially superseded ($\sigma_i = 1$) when a later entry has sufficiently high semantic similarity:
\begin{equation}
  \sigma_i = \mathbf{1}\!\left[
    \exists\, j > i : \cos(\mathbf{s}_i, \mathbf{s}_j) > \delta
  \right]
  \label{eq:conflict}
\end{equation}
where $\delta$ is a learned or tuned threshold. In practice, we maintain a
running set of active semantic signatures and check incoming entries against
it, achieving $O(1)$ amortized cost per token using locality-sensitive
hashing~\citep{indyk1998approximate}.

\subsection{Module 2: Forgetting Gate}
\label{sec:gate}

The forgetting gate $G_\theta: \R^{d_g} \to [0, 1]$ is a small neural
network that assigns a \emph{retention score} to each cache entry,
determining whether it should be kept, compressed, or evicted. This is the
core ``active forgetting'' mechanism, analogous to the selective synaptic
downscaling and dopaminergic erasure processes observed during sleep.

For each entry $i$ in the augmented cache, we compute an input feature vector:
\begin{equation}
  \mathbf{f}_i = \left[
    \mathbf{k}_i \,\|\,
    \mathbf{v}_i \,\|\,
    \text{PE}(\tau_i, t) \,\|\,
    \mathbf{s}_i \,\|\,
    \sigma_i \,\|\,
    a_i \,\|\,
    \bar{\mathbf{c}}
  \right]
  \label{eq:gate_input}
\end{equation}
where $\text{PE}(\tau_i, t)$ is a relative positional encoding capturing the
age of the entry, $a_i$ is the cumulative attention received by this entry
(analogous to the ``heavy hitter'' score of~\citet{zhang2023h2o}), and
$\bar{\mathbf{c}}$ is a global context summary (mean-pooled over recent
entries).

The retention score and resulting action are:
\begin{equation}
  r_i = G_\theta(\mathbf{f}_i), \qquad
  \text{action}_i = \begin{cases}
    \textsc{Keep}     & \text{if } r_i \geq \alpha_k \\
    \textsc{Compress} & \text{if } \alpha_e \leq r_i < \alpha_k \\
    \textsc{Evict}    & \text{if } r_i < \alpha_e
  \end{cases}
  \label{eq:gate_action}
\end{equation}
where $\alpha_k > \alpha_e$ are learned or tuned thresholds.

\paragraph{Architecture of $G_\theta$.}
To minimize overhead, $G_\theta$ is implemented as a 2-layer MLP with GeLU
activations and a sigmoid output:
\begin{equation}
  r_i = G_\theta(\mathbf{f}_i) =
  \text{sigmoid}\!\left(
    w_r^\top \text{GeLU}(W_1 \mathbf{f}_i + b_1) + b_r
  \right)
  \label{eq:gate_arch}
\end{equation}
with $W_1 \in \R^{d_h \times d_g}$, $w_r \in \R^{d_h}$, and hidden
dimension $d_h = 128$. The total parameter count is negligible
relative to the base model (typically $< 0.01\%$).

\paragraph{Differentiable Training.}
For differentiable training of the discrete keep/compress/evict decision, we
add a 3-way action head on top of the shared hidden representation and apply a
Gumbel-softmax relaxation:
\begin{equation}
\mathbf{z}_i = W_a \,\text{GeLU}(W_1 \mathbf{f}_i + b_1) + b_a \in \mathbb{R}^3
\label{eq:action_head}
\end{equation}
\begin{equation}
\hat{\mathbf{y}}_i =
\text{softmax}\!\left(
  \frac{\mathbf{z}_i + \mathbf{g}_i}{\tau_{\text{temp}}}
\right)
\label{eq:gumbel_softmax}
\end{equation}
where $\mathbf{z}_i$ denotes the logits for the keep, compress, and evict
actions, $\mathbf{g}_i \in \mathbb{R}^3$ is a sampled Gumbel noise vector whose
components are independently drawn from a Gumbel distribution, and
$\tau_{\text{temp}}$ is the temperature. At inference time, we apply the hard
decision rule in Eq.~\ref{eq:gate_action}.

When using soft attention biasing (below), only the scalar retention score
$r_i$ is needed rather than a 3-way action distribution. In this case, the
Gumbel-softmax relaxation simplifies to a \emph{Gumbel-sigmoid} (binary
concrete) relaxation applied directly to the gate output:
$r_i = \sigma\!\left((\ell_i + g_i) / \tau_{\text{temp}}\right)$,
where $\ell_i$ is the scalar logit from Eq.~\ref{eq:gate_arch} (before
the sigmoid), $g_i \sim \text{Gumbel}(0,1)$, and
$\sigma(\cdot)$ denotes the sigmoid function.

\paragraph{Soft Attention Biasing.}
Rather than applying hard eviction decisions at inference time, we introduce a
\emph{soft attention biasing} mechanism that uses the retention scores as
continuous modifiers of the attention computation. For each cache entry~$i$,
we compute an additive pre-softmax bias:
\begin{equation}
  b_i = \beta \cdot \log\!\left(\max(r_i, \varepsilon)\right)
  \label{eq:soft_bias}
\end{equation}
where $\beta > 0$ is a scale hyperparameter and $\varepsilon$ is a small
constant for numerical stability. The modified attention becomes:
\begin{equation}
  \text{Attn}_{\text{sleep}}(\mathbf{q}_t, \cC) = \sum_{i=1}^{t}
  \frac{\exp\!\left(\mathbf{q}_t^\top \mathbf{k}_i / \sqrt{d} + b_i\right)}
       {\sum_{j=1}^{t} \exp\!\left(\mathbf{q}_t^\top \mathbf{k}_j / \sqrt{d} + b_j\right)}
  \mathbf{v}_i
  \label{eq:biased_attention}
\end{equation}
Since $r_i \approx 0$ for stale entries yields $b_i \ll 0$, their attention
weights are exponentially suppressed without being removed entirely.
Because the bias is continuous, this approach requires no Gumbel-softmax
relaxation during joint training and no threshold calibration stage. It also
degrades gracefully: entries are downweighted, not deleted, so the model can
recover from gate errors. In practice, setting $\beta = 5$
provides sufficient suppression: an entry with $r_i = 0.01$ receives a bias of
$b_i \approx -23$, effectively zeroing its attention weight.

\subsection{Module 3: Consolidation Module}
\label{sec:consolidation}

Entries assigned the \textsc{Compress} action are not simply discarded but
are \emph{consolidated} into compact summary representations, analogous to
hippocampal replay transferring episodic memories into semantic knowledge.

We group entries marked for compression into clusters based on their
semantic signatures using a simple greedy algorithm: entry $i$ joins the
cluster of the most similar active entry, or starts a new cluster if no
similarity exceeds $\delta/2$.

For each cluster $\cS_m = \{i_1, \ldots, i_{|\cS_m|}\}$, we produce a
consolidated key-value pair:
\begin{equation}
  \mathbf{k}_m^* =
  \frac{\sum_{i \in \cS_m} r_i \mathbf{k}_i}
       {\sum_{i \in \cS_m} r_i + \varepsilon},
  \qquad
  \mathbf{v}_m^* = \sum_{i \in \cS_m} \alpha_i^{(m)} W_V' \mathbf{v}_i
  \label{eq:consolidation}
\end{equation}
where $\varepsilon > 0$ is a small constant for numerical stability.
To preserve the most useful information within each cluster, we compute
recency-biased attention weights using a learned query vector
$\mathbf{q}_{\text{latest}}$:
\begin{equation}
  \alpha_i^{(m)} =
  \frac{
    \exp\!\left(
      \mathbf{q}_{\text{latest}}^\top W_K' \mathbf{k}_i / \sqrt{d}
      \;+\; \eta \cdot \tilde{\tau}_i
    \right)
  }{
    \sum_{j \in \cS_m}
    \exp\!\left(
      \mathbf{q}_{\text{latest}}^\top W_K' \mathbf{k}_j / \sqrt{d}
      \;+\; \eta \cdot \tilde{\tau}_j
    \right)
  }
  \label{eq:consol_attn}
\end{equation}
where $W_K'$ and $W_V'$ are learned projection matrices,
$\tilde{\tau}_i = \tau_i / \max_j \tau_j$ is the normalized timestamp, and
$\eta > 0$ is a recency weight (set to $\eta = 2$ in our implementation).
This operation compresses $|\cS_m|$ entries into a single entry, yielding a
compression ratio of $|\cS_m|{:}1$. Because $\mathbf{q}_{\text{latest}}$ is
biased toward recent entries, the consolidated representation tends to
preserve the most recent value within the cluster, which mitigates
proactive interference.

\subsection{Sleep Trigger: Adaptive Scheduling}
\label{sec:trigger}

The sleep micro-cycle must be triggered at the right frequency: too often
incurs unnecessary overhead; too rarely allows interference to accumulate.
We propose an adaptive trigger based on two complementary signals:

\paragraph{Signal 1: Attention Entropy.}
When PI accumulates, attention distributions become more uniform (the model
``doesn't know where to look''). We monitor the average attention entropy
across heads:
\begin{equation}
  H_t = -\frac{1}{|\mathcal{H}|} \sum_{h \in \mathcal{H}}
  \sum_{i \in \cC_t^+}
  \alpha_{t,i}^{(h)} \log \alpha_{t,i}^{(h)}
  \label{eq:entropy}
\end{equation}
where $\alpha_{t,i}^{(h)}$ is the attention weight from position $t$ to $i$
at head $h$. A sleep cycle is triggered when $H_t$ exceeds a running
threshold $\bar{H} + \kappa \cdot \text{std}(H)$.

\paragraph{Signal 2: Conflict Density.}
We also monitor the fraction of current cache entries marked as superseded:
\begin{equation}
  \rho_t = \frac{1}{|\cC_t^+|} \sum_{i \in \cC_t^+} \sigma_i
  \label{eq:conflict_density}
\end{equation}
A sleep cycle is triggered when $\rho_t > \rho_{\max}$.

The overall trigger condition is:
\begin{equation}
  \text{trigger}(t) = \left(H_t > \bar{H} + \kappa \cdot \text{std}(H)\right)
  \;\lor\; \left(\rho_t > \rho_{\max}\right)
  \;\lor\; \left(t \bmod N_{\max} = 0\right)
  \label{eq:trigger}
\end{equation}
where the last term provides a fallback periodic trigger every $N_{\max}$
tokens.

\subsection{The Complete Sleep Micro-Cycle}
\label{sec:cycle}

Algorithm~\ref{alg:sleep} specifies the complete procedure.

\begin{algorithm}[t]
\caption{Sleep Micro-Cycle}
\label{alg:sleep}
\begin{algorithmic}[1]
\REQUIRE Augmented KV cache $\cC^+$, forgetting gate $G_\theta$,
         thresholds $\alpha_k, \alpha_e$, decay rate $\lambda$, bias scale $\beta$
\ENSURE Updated cache $\cC'^+$ (hard variant) or attention bias $\mathbf{b}$ (soft variant)

\STATE \textbf{// Phase 1: Key Decay (Synaptic Downscaling)}
\FOR{each entry $(\mathbf{k}_i, \mathbf{v}_i, \tau_i, \mathbf{s}_i, \sigma_i, a_i) \in \cC^+$}
  \STATE $\text{age}_i \leftarrow t_{\text{current}} - \tau_i$
  \STATE $\mathbf{k}_i \leftarrow \mathbf{k}_i \cdot (1+\text{age}_i)^{-\lambda}$ \COMMENT{Log-scale key decay}
\ENDFOR

\STATE \textbf{// Phase 2: Forgetting Gate (Active Forgetting)}
\FOR{each entry $i \in \cC^+$}
  \STATE Compute feature vector $\mathbf{f}_i$ via Eq.~\ref{eq:gate_input}
  \STATE $r_i \leftarrow G_\theta(\mathbf{f}_i)$
  \IF{$r_i < \alpha_e$}
    \STATE Mark entry $i$ for \textsc{Eviction}
  \ELSIF{$r_i < \alpha_k$}
    \STATE Mark entry $i$ for \textsc{Compression}
  \ENDIF
\ENDFOR

\STATE \textbf{// --- Soft Biasing Variant (used in experiments) ---}
\FOR{each entry $i \in \cC^+$}
  \STATE $b_i \leftarrow \beta \cdot \log\!\max(r_i, \varepsilon)$ \COMMENT{Eq.~\ref{eq:soft_bias}}
\ENDFOR
\STATE Re-run attention with additive bias $\mathbf{b}$ (Eq.~\ref{eq:biased_attention})
\RETURN attention bias vector $\mathbf{b}$

\STATE \textbf{// --- Hard Eviction Variant (Phases 3--5) ---}
\STATE \textbf{// Phase 3: Consolidation (Memory Replay)}
\STATE Cluster \textsc{Compress}-marked entries by semantic signature
\FOR{each cluster $\cS_m$}
  \STATE Compute $(\mathbf{k}_m^*, \mathbf{v}_m^*)$ via Eq.~\ref{eq:consolidation}
  \STATE Replace cluster entries with consolidated entry
\ENDFOR

\STATE \textbf{// Phase 4: Eviction}
\STATE Remove all \textsc{Evict}-marked entries from $\cC^+$
\STATE \textbf{// Phase 5: Renormalization}
\STATE Recompute derived metadata for surviving entries
\RETURN $\cC^+$
\end{algorithmic}
\end{algorithm}

\section{Training Objective}
\label{sec:training}

\method{} is trained with a dual-phase objective that optimizes both standard
language modeling and post-consolidation retrieval:
\begin{equation}
  \cL_{\text{total}} = \cL_{\text{wake}} +
  \lambda_s \cL_{\text{sleep}} +
  \lambda_c \cL_{\text{compress}} +
  \lambda_g \cL_{\text{align}}
  \label{eq:total_loss}
\end{equation}

\paragraph{Wake Loss.}
The standard autoregressive language modeling loss, unchanged from the base
model:
\begin{equation}
  \cL_{\text{wake}} = -\sum_{t=1}^{T} \log p(x_t \mid x_{<t})
  \label{eq:wake_loss}
\end{equation}

\paragraph{Sleep Loss.}
After each sleep micro-cycle, we evaluate retrieval accuracy on
\emph{current} (non-superseded) key-value associations:
\begin{equation}
  \cL_{\text{sleep}} = -\sum_{(k, v) \in \cM_{\text{current}}}
  \log p(v \mid k, \cC'^+)
  \label{eq:sleep_loss}
\end{equation}
where $\cM_{\text{current}}$ is the set of active (most recent) associations
and $\cC'^+$ is the post-consolidation cache. Here, $p(v \mid k, \cC'^+)$ denotes the model's predictive distribution over the current value associated with key $k$ when querying the post-consolidation
cache $\cC'^+$. This directly trains the system
to retain current information after forgetting stale information.

\paragraph{Compression Loss.}
To encourage cache efficiency, we penalize the expected fraction of the cache
retained. Under the 3-way Gumbel-softmax relaxation
(Eq.~\ref{eq:gumbel_softmax}), this takes the form
$\frac{1}{|\cC^+|} \sum_{i} (\hat{y}_{i,\textsc{Keep}} + \gamma
\hat{y}_{i,\textsc{Compress}})$.
When using soft attention biasing with scalar retention scores, this simplifies
to the mean retention:
\begin{equation}
  \cL_{\text{compress}} =
  \frac{1}{|\cC^+|}
  \sum_{i \in \cC^+} r_i
  \label{eq:compress_loss}
\end{equation}
Since $r_i \approx 1$ for entries the gate wishes to keep and $r_i \approx 0$
for entries it wishes to suppress, this loss directly penalizes retaining too
large a fraction of the cache. The coefficient $\lambda_c$ controls the
efficiency--accuracy trade-off.

\paragraph{Gate Alignment Loss.}
To provide direct supervision to the forgetting gate beyond the end-to-end
language modeling signal, we encourage the gate's retention scores to align
with the tagger's supersession labels:
\begin{equation}
  \cL_{\text{align}} =
  -\frac{1}{|\cC^+|}
  \sum_{i \in \cC^+}
  \Big[
    (1 - \sigma_i) \log r_i + \sigma_i \log (1 - r_i)
  \Big]
  \label{eq:align_loss}
\end{equation}
where $\sigma_i$ is the binary superseded flag from the tagger. This binary
cross-entropy objective trains the gate to assign high retention to current
entries ($\sigma_i = 0$) and low retention to superseded entries
($\sigma_i = 1$). We set $\lambda_g = 0.3$ in our experiments.

\subsection{Training Data Generation}
\label{sec:training_data}

We construct training sequences specifically designed to exercise the sleep
mechanism, inspired by the PI-LLM paradigm:

\begin{enumerate}[leftmargin=*]
  \item \textbf{PI Sequences:} Streams of key-value pairs where the same key
    is updated $n$ times ($n \in [2, 50]$), followed by retrieval queries.
    The model must learn to evict prior values and retain only the latest.

  \item \textbf{Mixed-Relevance Sequences:} Interleaved important and
    unimportant information, where ``importance'' is defined by whether the
    information is queried later. The model must learn to predict and retain
    information that will be needed.

  \item \textbf{Multi-Entity Sequences:} Multiple entities with independent
    update streams, requiring the model to maintain and selectively prune
    per-entity state.

  \item \textbf{Natural Text with Synthetic Updates:} Real documents
    augmented with ``correction'' statements (e.g., ``The previous figure was
    wrong; the actual revenue is \$4.2B''), testing whether the model learns
    to supersede naturalistic corrections.
\end{enumerate}

\subsection{Curriculum Training Strategy}
\label{sec:curriculum}

We employ a multi-stage curriculum:

\begin{enumerate}[leftmargin=*]
  \item \textbf{Stage 0 --- Base Model Warm-Start} ($\sim$22\% of training):
    Pre-train the base transformer on PI sequences with standard autoregressive
    loss, with all sleep modules disabled. This establishes basic sequence
    modeling capability before introducing the sleep mechanism, and ensures
    a fair comparison since baselines receive the same total training budget.

  \item \textbf{Stage 1 --- Gate Pre-training} ($\sim$11\% of training):
    Train the forgetting gate $G_\theta$ in isolation on PI sequences with
    ground-truth supersession labels using binary cross-entropy loss. The gate
    learns to identify stale entries before being coupled with the rest of the
    system.

  \item \textbf{Stage 2 --- Joint Training with Soft Biasing} ($\sim$67\% of training):
    Train all components end-to-end with the full objective
    (Eq.~\ref{eq:total_loss}). Using the soft attention biasing mechanism
    (Eq.~\ref{eq:soft_bias}), every forward pass computes both unbiased (wake)
    and soft-biased (sleep) logits. The wake loss is computed on unbiased
    predictions, the sleep loss on biased predictions, and a gate alignment
    loss keeps the gate consistent with the tagger's conflict signal. PI depth
    is increased via a curriculum that gradually exposes the model to deeper
    interference.

  \item \textbf{Stage 3 --- Threshold Calibration} (optional):
    When using hard eviction (Eq.~\ref{eq:gate_action}), this stage fine-tunes
    the thresholds $\alpha_k$, $\alpha_e$, $\delta$, and $\rho_{\max}$ on
    held-out validation sequences. When using soft attention biasing, this
    stage can be omitted as the soft bias is self-calibrating.
\end{enumerate}

\section{Theoretical Analysis}
\label{sec:theory}

We analyze how \method{} affects the PI accumulation dynamics observed
by~\citet{wang2025unable}.

\subsection{PI Accumulation Without \method{}}

Consider a sequence with $n$ updates to the same key: $(k, v_1), \ldots,
(k, v_n)$. At query time, the attention weight assigned to the correct
(most recent) value $v_n$ relative to a stale value $v_j$ depends on
the dot-product similarity between the query and each key. Since all
entries share the same semantic key $k$, the attention weights are
approximately uniform across the $n$ entries, giving:
\begin{equation}
  p(\text{retrieve } v_n) \approx \frac{1}{n}
  \label{eq:uniform_retrieval}
\end{equation}
This explains the roughly $1/n$ accuracy decay observed empirically.
Taking the log: $\log p \approx -\log n$, which is the log-linear
relationship reported by~\citet{wang2025unable}.

\subsection{PI Accumulation With \method{}}

With \method{}, after each sleep cycle, entries marked as superseded are
evicted. Suppose a sleep cycle is triggered after every $N$ new tokens, and
the forgetting gate correctly identifies stale entries with probability
$p_{\text{correct}}$.

\begin{theorem}[Interference Reduction]
\label{thm:interference}
Under the assumptions that (i) the forgetting gate identifies superseded
entries with probability $p_c \geq 1 - \epsilon$ for $\epsilon < 1$, and
(ii) sleep cycles occur at intervals of $N$ tokens, the expected number of
stale entries competing with the current value after $n$ updates is bounded
by:
\begin{equation}
  \E[\text{stale entries}] \leq
  \min\!\left(N, \;\frac{\epsilon \cdot n}{1 - (1-p_c)^{n/N}}\right)
  = O\!\left(\max(N, \epsilon \cdot n)\right)
  \label{eq:stale_bound}
\end{equation}
For $p_c$ close to 1 (low $\epsilon$) and moderate $N$, this is $O(N)$
regardless of $n$---a constant rather than a linear function of the number of
updates.
\end{theorem}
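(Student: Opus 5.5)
The plan is to prove the two quantities inside the $\min$ as \emph{separate} upper bounds on $\E[\text{stale entries}]$ and then take the smaller. The $O(\max(N,\epsilon n))$ form then follows at once, since a minimum is at most either argument. I fix two conventions that the statement leaves implicit. First, stale entries are counted at the query position, which I take to fall at the end of a sleep micro-cycle (so $n/N$ cycles have elapsed). Second, each cycle re-applies $G_\theta$ to every surviving entry (Algorithm~\ref{alg:sleep}, Phase~2), and a superseded entry is missed in a given pass with probability $1-p_c\le\epsilon$. I also take the consolidation and eviction phases (Phases~3--4, hard variant) to act on every entry marked \textsc{Compress} or \textsc{Evict}.

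\emph{Second term.} Every stale entry $(k,v_j)$ with $j<n$ was present at the most recent sleep cycle. Under assumption~(i) it survives that cycle's sweep with probability at most $1-p_c\le\epsilon$. By linearity of expectation, the expected number of stale survivors is at most $\epsilon(n-1)\le\epsilon n$; this step needs no independence across entries. Since $0\le 1-p_c<1$, the denominator satisfies $0<1-(1-p_c)^{n/N}\le 1$, so $\epsilon n\le \epsilon n/(1-(1-p_c)^{n/N})$, which is the second bound. The denominator merely loosens the estimate. I would remark that it is what one obtains by summing survival over repeated sweeps: an entry must be missed in every one of the up to $n/N$ passes, and the geometric sum is normalized by $1-(1-p_c)^{n/N}$.

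\emph{First term ($\le N$).} At most $N$ tokens, hence at most $N$ new entries for the key, arrive between consecutive cycles. I will argue that everything older collapses to at most one entry per semantic slot. Every superseded entry carries $\sigma_i=1$, so it is either evicted, or marked \textsc{Compress} and then clustered by signature. By Eq.~\ref{eq:conflict}, all superseded entries for key $k$ have pairwise signature similarity above $\delta>\delta/2$. The greedy clustering of Module~3 therefore places them in a single cluster $\cS_m$, which Eq.~\ref{eq:consolidation} replaces with one consolidated pair. Gate misses that are kept outright are the only exception. For these I would bound the count by the $\le N$ entries produced since the previous sweep, together with the consolidated entry that replaced the older ones.

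\emph{Main obstacle.} Tightening the first bound so that window entries plus the consolidated entry total at most $N$ rather than $N+1$ is the delicate step. It also requires handling entries that the gate keeps rather than compresses. I would first try counting the current value $v_n$ itself as one of the $\le N$ window slots, so that at most $N-1$ stale entries lie in the window and the single consolidated entry brings the total to at most $N$. If the kept-by-mistake entries break this count, I will state the bound as $N+O(1)$; this still yields the claimed $O(\max(N,\epsilon n))$.
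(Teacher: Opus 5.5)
Your second-term argument is sound under your end-of-cycle convention, provided ``identified'' means evicted as in the paper's sketch; unlike the sketch, it needs no independence. The first term, however, is a genuine gap, not an $N$ versus $N+1$ bookkeeping issue. The gate misses you set aside are not confined to the last window. An entry the gate keeps is untouched by Phases~3--4 of that cycle, and under your own convention that each cycle re-applies $G_\theta$ to all survivors, a superseded entry that arrived $k$ cycles before the query is still present with probability $(1-p_c)^k$. Summing over the $n/N$ windows,
\[
  \E[\text{stale entries}] \approx N\sum_{k=1}^{n/N}(1-p_c)^k
  = \frac{N(1-p_c)\bigl(1-(1-p_c)^{n/N}\bigr)}{p_c},
\]
which exceeds $N$ once $1-p_c>1/2$ and $n/N$ is large, and tends to $n$ as $p_c\to 0$. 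So neither $\le N$ nor your fallback $N+O(1)$ holds for general $\epsilon<1$. The literal $\min(N,\cdot)$ in the statement is not provable; the correct bound is a \emph{sum} of a window term and a tail term.

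The clustering step has further problems:
\begin{itemize}
  \item Eq.~\ref{eq:conflict} sets $\sigma_i=1$ as soon as \emph{one} later signature is $\delta$-similar. That does not make all superseded entries pairwise $\delta$-similar, since cosine similarity is not transitive.
  \item Nothing forces consolidated pairs from successive cycles to merge, so they can accumulate one per cycle.
  \item Those consolidated pairs are themselves stale competitors, which your second-term count silently drops.
  \item Hypotheses (i)--(ii) say nothing about Module~3, so routing the argument through consolidation imports assumptions the theorem does not grant.
\end{itemize}

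The idea you are missing is the one the paper's sketch uses. Bound the at most $N$ entries of the current, not-yet-swept window crudely; this is why a separate window term is needed if the query can fall mid-window, where your $\epsilon n$ fails for small $\epsilon$. For older windows, survival decays geometrically in the number of sweeps, so the tail contributes at most $N\sum_{k\ge1}(1-p_c)^k\le N\epsilon/(1-\epsilon)$, independent of $n$. This geometric decay is what delivers the theorem's closing claim of $O(N)$ regardless of $n$ for small $\epsilon$. Your linearity bound $\epsilon n$ discards it and grows with $n$, so it recovers $O(\max(N,\epsilon n))$ but not the $n$-independent conclusion. Replace the consolidation argument with this window-plus-geometric-tail computation, and state the result as $N+\min\bigl(\epsilon n,\,N\epsilon/(1-\epsilon)\bigr)$ rather than as a minimum with $N$.
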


\begin{proof}[Proof sketch]
Between consecutive sleep cycles, at most $N$ new entries can accumulate.
Each sleep cycle evicts each stale entry independently with probability
$p_c$. After $\lfloor n/N \rfloor$ sleep cycles, the probability that a
specific stale entry from the first update survives is
$(1 - p_c)^{\lfloor n/N \rfloor}$, which decays exponentially. Summing
over all $n$ stale entries and taking the expectation gives the bound.
\end{proof}

\begin{corollary}
The retrieval probability for the current value under \method{} is:
\begin{equation}
  p(\text{retrieve } v_n) \geq \frac{1}{1 + O(N)}
  \label{eq:improved_retrieval}
\end{equation}
which is a constant independent of $n$, eliminating the log-linear
degradation.
\end{corollary}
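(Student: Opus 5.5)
The corollary follows by combining Theorem~\ref{thm:interference} with the attention-weight reasoning behind Eq.~\ref{eq:uniform_retrieval}. The plan has three steps: model retrieval as softmax competition among the surviving entries that share the semantic key $k$; bound the retrieval probability below by a function of the number $S$ of surviving stale entries; then pass the expectation through using convexity.

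\emph{Step 1: conditional bound.} Fix a realization in which $S$ stale entries survive the most recent sleep cycle. I keep the same approximation used to derive Eq.~\ref{eq:uniform_retrieval}: the query logits $\mathbf{q}_t^\top \mathbf{k}_i/\sqrt{d}$ of all entries carrying key $k$ are roughly equal, and entries for unrelated keys receive negligible mass. The current entry $v_n$ is never marked superseded, because no later entry conflicts with it in Eq.~\ref{eq:conflict}. Hence it is not evicted, and its bias $b_n$ is at least the bias of any stale entry. The decay step $\mathbf{k}_i \leftarrow \mathbf{k}_i(1+\text{age}_i)^{-\lambda}$ shrinks older keys more than newer ones, so it only helps $v_n$, provided the logits $\mathbf{q}_t^\top\mathbf{k}_i$ are nonnegative. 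This positivity is an extra assumption I will state explicitly. Under these conditions the softmax gives
\[
p(\text{retrieve } v_n \mid S) \;\geq\; \frac{1}{1+S}.
\]
In the soft-biasing variant the same inequality holds, and is in fact stronger, because each stale entry's weight is multiplied by $(r_i/r_n)^{\beta}\le 1$.

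\emph{Step 2: averaging.} The function $x\mapsto 1/(1+x)$ is convex on $x\ge 0$, so Jensen's inequality gives
\[
p(\text{retrieve } v_n) = \E\big[p(\text{retrieve } v_n \mid S)\big] \;\geq\; \frac{1}{1+\E[S]}.
\]
Theorem~\ref{thm:interference} bounds $\E[S]$ by Eq.~\ref{eq:stale_bound}. For $\epsilon$ small, specifically $\epsilon n = O(N)$ or $\epsilon = 0$, that bound is $O(N)$, which yields Eq.~\ref{eq:improved_retrieval}. The right-hand side does not depend on $n$. Consequently $\log p$ stays bounded below by a constant rather than falling like $-\log n$, and this removes the log-linear degradation.

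\emph{Main obstacle.} The difficulty is the regime of the theorem. Its stated bound is $O(\max(N,\epsilon n))$, which is independent of $n$ only when $\epsilon n$ is bounded. I would therefore state the corollary under the theorem's closing hypothesis ($p_c$ close to $1$, i.e.\ $\epsilon = O(N/n)$), or else write the more honest general form $p \ge 1/(1+O(\max(N,\epsilon n)))$. A second, lesser issue is making the approximation in Step~1 rigorous, namely the equal-logit assumption and the assumption of no cross-key attention mass. I would not prove these; I would adopt them as the same modeling assumptions already used for Eq.~\ref{eq:uniform_retrieval}, so that the comparison between the two regimes is like for like.
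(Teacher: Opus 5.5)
\paragraph{Overall.} Your route matches the paper's implicit derivation: you plug the bound of Theorem~\ref{thm:interference} on the number $S$ of competing stale entries into the $1/(1+\#\text{competitors})$ softmax model behind Eq.~\eqref{eq:uniform_retrieval}. The paper never proves the corollary and simply substitutes the expected count into $1/(1+\cdot)$. Your conditional bound followed by Jensen's inequality is therefore a genuine tightening.

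\paragraph{The ``main obstacle'' rests on a misreading.} Acting on it would leave you proving a strictly weaker statement than the corollary claims. The inequality in Eq.~\eqref{eq:stale_bound} reads $\E[S]\le\min(N,\cdot)\le N$, and it places no condition on $\epsilon$. The trailing $O(\max(N,\epsilon n))$ is only a loose restatement of it. Your Jensen step thus already gives $p(\text{retrieve } v_n)\ge 1/(1+N)$ for every fixed $\epsilon<1$. You need no hypothesis like $\epsilon=O(N/n)$, which would be odd anyway since it asks the gate to improve as $n$ grows.

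You might distrust the $\min$, which does ignore survivors from earlier blocks. The proof sketch's own model still gives the same conclusion. In that model at most $N$ entries arrive per block, and each stale entry is evicted independently with probability $p_c$ at every cycle. A block that has been through $j$ cycles therefore contributes at most $N(1-p_c)^j$ in expectation. Summing, $\E[S]\le N\sum_{j\ge 0}(1-p_c)^j=N/p_c\le N/(1-\epsilon)$, which is independent of $n$. The $j=0$ term counts the entries that arrived since the last cycle; your $S$ must include these, not only survivors of the most recent cycle. Your $\epsilon n$-dependent form is correct only if gate errors persist across cycles, i.e.\ the same stale entry is missed every time. That is plausible for a deterministic gate at inference, but it is not the theorem's hypothesis.

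\paragraph{Two smaller points in Step~1.} First, ``$v_n$ is never marked superseded, hence it is not evicted and $b_n\ge b_i$'' does not follow from anything in the theorem. The gate acts on $r_i=G_\theta(\mathbf{f}_i)$, where $\sigma_i$ is only one input feature. Assumption~(i) constrains the gate's behavior on stale entries only. You need an explicit hypothesis that the current entry is retained, plus $r_n\ge r_i$ for the soft-bias remark. The paper assumes this tacitly as well.

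Second, your soft-biasing aside does not extend the corollary to that variant. Nothing is evicted there, so $S=n-1$ and Theorem~\ref{thm:interference} does not apply. Under your equal-logit model the bound becomes $1/\bigl(1+\sum_{i<n}(r_i/r_n)^{\beta}\bigr)$. For fixed retention ratios its denominator still grows linearly in $n$. This is consistent with the saturation the paper reports in \S\ref{sec:failure}.
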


\subsection{Compression Ratio Analysis}

\begin{proposition}[Cache Size Bound]
If the consolidation module achieves a per-cluster compression ratio of
$c \geq 2$ and the fraction of entries evicted per cycle is $f_e$, the
steady-state cache size is bounded by:
\begin{equation}
  |\cC'^+| \leq \frac{N}{f_e + (1 - f_e)(1 - 1/c)}
  \label{eq:cache_bound}
\end{equation}
For typical values ($f_e = 0.3$, $c = 4$), this gives $|\cC'^+| \leq 1.67N$,
a substantial reduction from the unbounded growth of the standard KV cache.
\end{proposition}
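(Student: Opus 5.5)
The plan is to model the cache size across sleep cycles as a scalar linear recurrence, identify its fixed point, and show by induction that the size never exceeds that fixed point once it starts below it. Let $S_k$ denote the cache size immediately after the $k$-th sleep micro-cycle, with $S_0 = 0$. Between cycles at most $N$ new entries are appended, following assumption (ii) of Theorem~\ref{thm:interference}, so the pre-cycle cache has size at most $S_k + N$.

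Next I would fix the per-cycle removal fraction $q$. I would adopt the reading implicit in Eq.~\ref{eq:cache_bound}: a fraction $f_e$ of entries is evicted, and the surviving fraction $1-f_e$ is grouped into clusters of size $c$, each replaced by one consolidated entry via Eq.~\ref{eq:consolidation}. The net removal fraction is then
\[
  q = f_e + (1-f_e)\bigl(1 - 1/c\bigr),
\]
and $q \geq 1/2$ whenever $c \geq 2$. This yields one of two recurrences, depending on whether the removal acts on the whole pre-cycle cache or only on entries that were already present:
\[
  S_{k+1} \leq (1-q)(S_k + N) \qquad\text{or}\qquad S_{k+1} \leq (1-q)S_k + N .
\]

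Both maps are affine contractions with slope $1-q < 1$. Their fixed points are $N(1-q)/q$ and $N/q$ respectively, and both are at most $N/q$. I would therefore prove $S_k \le N/q$ by induction on $k$. The base case holds since $S_0 = 0$. For the step, $S_k \le N/q$ gives $(1-q)S_k + N \le (1-q)N/q + N = N/q$, and the first recurrence is dominated by the second. Since the fixed point attracts geometrically, the bound also holds asymptotically, which is the steady-state claim. Substituting the value of $q$ gives Eq.~\ref{eq:cache_bound}.

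The main obstacle is the modeling step rather than the algebra. The bound needs a worst-case assumption that every non-evicted entry is compressed at ratio $c$. Otherwise \textsc{Keep} entries survive uncompressed, and $q$ must be lowered to $f_e + f_{\mathrm{comp}}(1-1/c)$, where $f_{\mathrm{comp}}$ is the fraction of entries marked \textsc{Compress}. I would state this assumption explicitly, since the gate's three-way action rule (Eq.~\ref{eq:gate_action}) does not guarantee it.

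I would also recheck the numerical instance. With $f_e = 0.3$ and $c = 4$ the formula gives $q = 0.825$ and $N/q \approx 1.21N$, not $1.67N$. The value $1.67N = N/0.6$ would correspond to compressing only part of the surviving entries, for example roughly $0.4$ of them. Either reading is still a bounded multiple of $N$, so the qualitative conclusion is unaffected.
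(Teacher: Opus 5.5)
The paper states this proposition without any proof, so there is no argument from the authors to compare against. Your recurrence argument fills that gap, and it is correct under the model you make explicit.

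Your reading (a) also shows where the expression $N/q$ comes from. Its post-cycle fixed point is $N(1-q)/q$, and adding the $N$ entries that accumulate before the next cycle gives exactly $N(1-q)/q + N = N/q$. So the stated formula is the peak (pre-sleep) size under that model, and it therefore also bounds the post-consolidation cache. The hypothesis $c \ge 2$ is not used in the induction, since any $q > 0$ suffices; it only gives your uniform cap $N/q \le 2N$. Your numerical check is right: $0.3 + 0.7 \cdot 0.75 = 0.825$, so the formula yields $N/0.825 \approx 1.21N$. The paper's $1.67N$ is inconsistent with its own bound.

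A few points to tighten:
\begin{itemize}
\item Assuming every non-evicted entry is compressed at ratio $c$ is the optimistic, maximal-removal case, not a worst case. \textsc{Keep} entries lower $q$ and enlarge the bound, which is why this must be a stated hypothesis.
\item In your own formula $q = f_e + f_{\mathrm{comp}}(1-1/c)$, $f_{\mathrm{comp}}$ is a fraction of the whole pre-cycle cache. So $q = 0.6$ needs $f_{\mathrm{comp}} = 0.4$ of \emph{all} entries, which is $4/7 \approx 0.57$ of the survivors, not $0.4$ of the survivors.
\item Your recurrences apply the same removal fraction to summary entries produced in earlier cycles. That is the natural reading of ``fraction of entries evicted per cycle,'' but the bound depends on it and it should be stated. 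If consolidated entries were exempt from later eviction or re-compression, they would accumulate linearly in the number of cycles and no steady-state bound would hold.
\item The proposition only concerns the hard-eviction variant. The soft-biasing variant used in the experiments never shrinks the cache.
\end{itemize}
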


\section{Experimental Design}
\label{sec:experiments}

We validate \method{} through a controlled proof-of-concept experiment on
synthetic PI sequences, using a small-scale transformer trained from scratch.
This controlled setting isolates the effect of the sleep mechanism from
confounds present in large pre-trained models (e.g., memorized associations,
instruction-following biases).

\subsection{Model and Scale}
\label{sec:models}

We train a 4-layer causal transformer with $d_{\text{model}} = 128$, 4
attention heads, feed-forward dimension $d_{\text{ff}} = 256$, and
$\text{max\_seq\_len} = 1024$. The vocabulary size is 1024 (synthetic tokens).
Table~\ref{tab:model_scale} summarizes the parameter counts.

\begin{table}[h]
\centering
\caption{Model parameter breakdown.}
\label{tab:model_scale}
\begin{tabular}{lrl}
\toprule
\textbf{Component} & \textbf{Parameters} & \textbf{Description} \\
\midrule
Base transformer    & 793,344  & Embedding + 4 layers + LM head \\
Temporal tagger     & 16,576   & Semantic signature projection \\
Forgetting gate     & 74,241   & 2-layer MLP (retention scorer) \\
Consolidation       & 33,152   & Cross-attention compression \\
\midrule
\textbf{Total}      & \textbf{917,313} & \\
Sleep overhead      & 15.6\%   & \\
\bottomrule
\end{tabular}
\end{table}

\subsection{Data: PI-LLM Benchmark}
\label{sec:benchmark}

We adopt the PI-LLM paradigm of~\citet{wang2025unable} in a synthetic
setting. Each episode consists of a stream of key-value update tokens for a
single entity: $(e, v_1), (e, v_2), \ldots, (e, v_n)$, followed by a query
for the most recent value. Entity keys are drawn from a vocabulary of 100
entities and values from 500 possible values. We evaluate at seven PI depths:
$n \in \{1, 2, 5, 10, 15, 20, 30\}$, with 200 episodes per depth.

\subsection{Training Protocol}
\label{sec:training_protocol}

All methods receive the same total training budget of 45 epochs. For
\method{}, this is split across three active stages following the curriculum
in \S\ref{sec:curriculum}: 10 epochs warm-start (Stage~0), 5 epochs gate
pre-training (Stage~1), and 30 epochs joint training with soft attention
biasing (Stage~2). Stage~3 (threshold calibration) is omitted, as the soft
bias mechanism is self-calibrating. Baseline models receive 45 epochs of
standard autoregressive training with their respective cache management
strategies active throughout training and evaluation. We use AdamW with
learning rate $3 \times 10^{-4}$ and batch size 16 for all methods.

During Stage~2, we employ a PI depth curriculum: the maximum interference
depth is gradually increased across epochs, beginning with $n \leq 5$ and
reaching $n = 30$ by the final epoch. Each forward pass produces both
unbiased logits (for the wake loss) and soft-biased logits (for the sleep
loss), with $\beta = 5.0$ in Eq.~\ref{eq:soft_bias}.

\subsection{Baselines}
\label{sec:baselines}

We compare \method{} against five baselines, spanning the major approaches
to KV cache management:
\begin{enumerate}[leftmargin=*]
  \item \textbf{Full KV Cache:} Standard transformer retaining all cache
    entries (upper bound on interference).
  \item \textbf{Sliding Window:} Fixed window of 64 entries; oldest entries
    are discarded~\citep{beltagy2020longformer}.
  \item \textbf{H\textsubscript{2}O:} Heavy-hitter oracle retaining entries
    with highest cumulative attention scores plus a recent
    window~\citep{zhang2023h2o}.
  \item \textbf{StreamingLLM:} Retains the first 4 ``attention sink'' tokens
    plus a sliding window~\citep{xiao2024efficient}.
  \item \textbf{Decay Only (Ablation):} Applies exponential key decay
    without the forgetting gate or consolidation module, isolating the
    contribution of learned gating.
\end{enumerate}

\subsection{Metrics}
\label{sec:metrics}

\begin{itemize}[leftmargin=*]
  \item \textbf{Retrieval Accuracy:} Exact-match accuracy on the most recent
    value for each queried entity.
  \item \textbf{Stale Retrieval Rate:} Fraction of all episodes in which the
    predicted value matches a \emph{superseded} (outdated) value for the
    queried entity, directly measuring PI severity.
  \item \textbf{PI Slope:} Slope of the accuracy-vs-$\log(n)$ regression
    line. A slope of 0 indicates perfect PI resistance; more negative slopes
    indicate greater susceptibility.
\end{itemize}

\section{Experimental Results}
\label{sec:results}

\subsection{Main Results: Retrieval Accuracy}

Table~\ref{tab:main_results} presents retrieval accuracy and stale retrieval
rates across all seven PI depths for \method{} and all baselines.
Figure~\ref{fig:accuracy_curve} visualizes the accuracy curves.

\begin{figure}[t]
\centering
\includegraphics[width=0.85\linewidth]{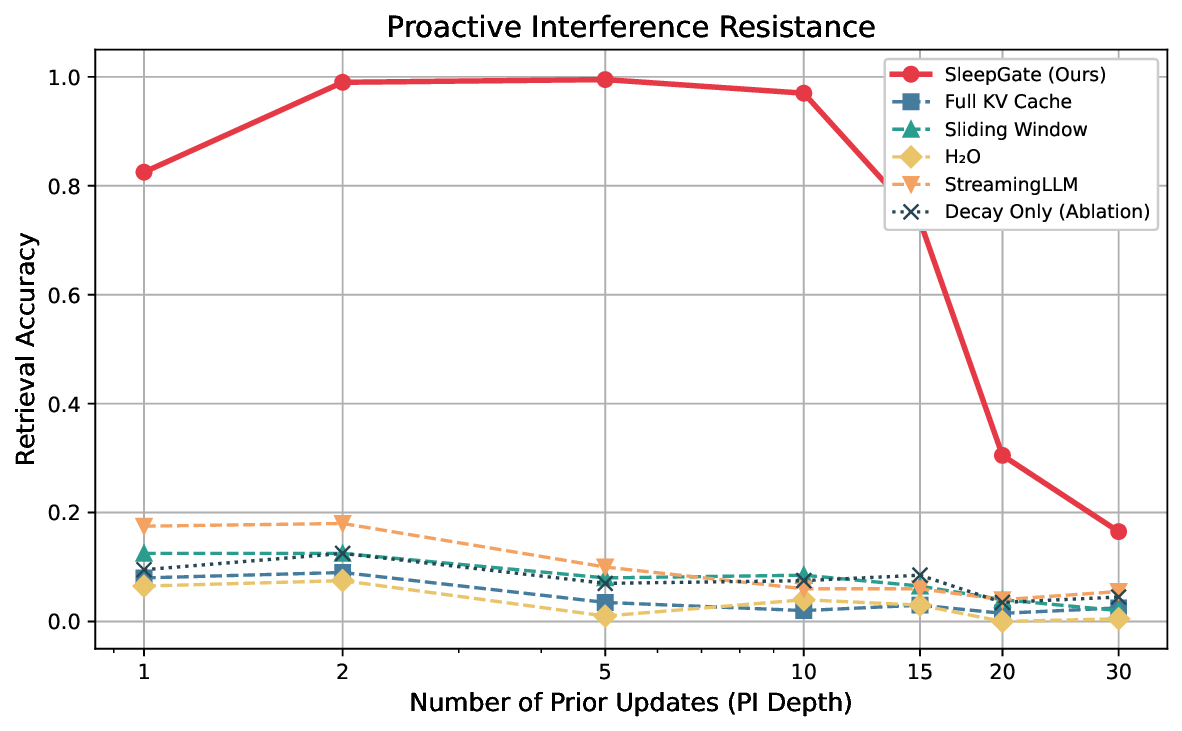}
\caption{Retrieval accuracy vs.\ PI depth (number of prior updates). \method{}
(solid red) maintains near-perfect accuracy through $n = 10$ before degrading at
higher depths. All baselines cluster near zero across all depths.}
\label{fig:accuracy_curve}
\end{figure}

\begin{table}[t]
\centering
\caption{Retrieval accuracy and stale retrieval rate (\%) across PI depths.
Each cell reports accuracy / stale\%. $N = 200$ episodes per depth per method.
\method{} uses post-sleep (soft-biased) evaluation.}
\label{tab:main_results}
\begin{tabular}{l@{\hskip 6pt}c@{\hskip 6pt}c@{\hskip 6pt}c@{\hskip 6pt}c@{\hskip 6pt}c@{\hskip 6pt}c@{\hskip 6pt}c}
\toprule
\textbf{Method} & $n{=}1$ & $n{=}2$ & $n{=}5$ & $n{=}10$ & $n{=}15$ & $n{=}20$ & $n{=}30$ \\
\midrule
\method{}
  & \textbf{82.5} / 0.0
  & \textbf{99.0} / 0.5
  & \textbf{99.5} / 0.0
  & \textbf{97.0} / 3.5
  & \textbf{73.5} / 23.5
  & \textbf{30.5} / 55.5
  & \textbf{16.5} / 62.0 \\
\addlinespace
StreamingLLM
  & 17.5 / 0.0
  & 18.0 / 9.0
  & 10.0 / 28.5
  & 6.0 / 19.5
  & 6.0 / 22.5
  & 4.0 / 31.5
  & 5.5 / 29.0 \\
Sliding Window
  & 12.5 / 0.0
  & 12.5 / 7.0
  & 8.0 / 23.5
  & 8.5 / 23.0
  & 6.5 / 23.0
  & 4.0 / 22.5
  & 2.0 / 30.0 \\
Decay Only
  & 9.5 / 0.0
  & 12.5 / 7.0
  & 7.0 / 22.0
  & 7.5 / 23.0
  & 8.5 / 25.0
  & 3.5 / 23.5
  & 4.5 / 22.5 \\
Full KV Cache
  & 8.0 / 0.0
  & 9.0 / 7.5
  & 3.5 / 21.0
  & 2.0 / 17.0
  & 3.0 / 20.0
  & 1.5 / 18.5
  & 2.5 / 19.5 \\
H\textsubscript{2}O
  & 6.5 / 0.0
  & 7.5 / 5.5
  & 1.0 / 7.5
  & 4.0 / 11.0
  & 3.0 / 11.5
  & 0.0 / 10.0
  & 0.5 / 9.0 \\
\bottomrule
\end{tabular}
\end{table}

\method{} achieves near-perfect retrieval accuracy at moderate PI depths
($99.0$--$99.5$\% for $n = 2$--$5$; $97.0$\% at $n = 10$), outperforming
the best baseline (StreamingLLM at $18.0$\%) by a factor of $5.5\times$ at
$n = 2$. The gap widens further at $n = 5$: \method{} achieves
$99.5$\% vs.\ $10.0$\% for the best baseline---a $10\times$ improvement.
All baselines perform near chance (roughly $1/500 = 0.2$\% for random
guessing) at deeper PI levels, confirming that no existing cache management
strategy addresses proactive interference.

\subsection{Baseline Analysis}

The baseline results highlight a structural problem with existing KV cache
methods:

\paragraph{H\textsubscript{2}O performs worst.}
Despite being designed to retain ``important'' tokens,
H\textsubscript{2}O achieves the lowest accuracy across all depths
($0.0$--$7.5$\%). This is because cumulative attention scores are
\emph{anti-correlated} with freshness under PI: the model attends most to
the entries it has seen most often, which are precisely the stale values.
Retaining heavy hitters actively preserves interference.

\paragraph{StreamingLLM is the best baseline, but still poor.}
The attention sink mechanism provides marginal benefit ($17.5$\% at $n = 1$
vs.\ $8.0$\% for full cache), likely because the sink tokens provide a
stable reference point. However, this advantage disappears at deeper PI
levels.

\paragraph{Decay Only vs.\ \method{}.}
The decay-only ablation ($\leq 12.5$\%) performs comparably to other
baselines, demonstrating that exponential key decay without learned
gating is insufficient. The full \method{} framework with its trained
forgetting gate is essential for effective interference resolution.

\subsection{Failure Mode Analysis}
\label{sec:failure}

\method{} exhibits a sharp performance transition around $n = 15$, degrading
from $97.0$\% ($n = 10$) to $73.5$\% ($n = 15$) and further to $16.5$\%
($n = 30$). The stale retrieval rate explains why:

\begin{itemize}[leftmargin=*]
  \item For $n \leq 10$: Stale retrieval rate is $\leq 3.5$\%, indicating the
    gate successfully identifies and suppresses nearly all superseded entries.
  \item For $n = 15$--$30$: Stale retrieval rate rises to $23.5$--$62.0$\%,
    indicating the gate's semantic signatures ($d_s = 64$) lack sufficient
    capacity to disambiguate 15--30 near-identical entries for the same entity.
\end{itemize}

This failure has two contributing factors. First, the \emph{soft bias saturates}:
with $\beta = 5$ and retention $r_i = 0.01$, the bias is approximately $-23$,
but when 29 stale entries each contribute residual attention mass, the cumulative
probability leaking to stale entries can overwhelm the single correct entry.
Second, the \emph{semantic signature capacity} is limited: with $d_s = 64$, the
tagger cannot produce sufficiently distinct signatures for 30 updates to the same
entity key.

\paragraph{The depth-1 anomaly.}
Accuracy at $n = 1$ ($82.5$\%) is lower than at $n = 2$--$10$.
At $n = 1$ there is no prior value to supersede, so the sleep mechanism provides
no benefit---the model relies entirely on base transformer retrieval capability.
The fact that $n = 2$ accuracy ($99.0$\%) exceeds $n = 1$ confirms that the soft
bias \emph{actively improves} retrieval when there is clear conflict to resolve.

\subsection{Training Dynamics}

During Stage~2 joint training, we observe characteristic learning dynamics:
\begin{itemize}[leftmargin=*]
  \item The gate achieves $99.3$\% accuracy on ground-truth supersession labels
    after just 5 epochs of pre-training (Stage~1), indicating that conflict
    detection is a relatively easy task for the learned semantic signatures.
  \item Post-sleep retrieval accuracy improves steadily through Stage~2,
    reaching peak performance around epoch~24 and plateauing thereafter.
  \item The PI depth curriculum is essential: exposing the model to $n = 30$
    sequences from the start leads to training instability, while the
    progressive schedule ($n \leq 5 \to n \leq 10 \to n \leq 15 \to n \leq 30$)
    produces stable convergence.
\end{itemize}

\section{Discussion}
\label{sec:discussion}

\subsection{Why Architecture, Not Prompting?}

The failure of prompt-based interventions reported by~\citet{wang2025unable}
is not surprising from a mechanistic perspective. Telling a model to ``ignore
earlier values'' requires the model to (a) identify which values are
``earlier'' in a semantic rather than positional sense, (b) actively suppress
attention to those entries, and (c) do so reliably across all layers and
heads. The standard attention mechanism provides no lever for step (b): all
keys participate equally in the softmax competition. \method{} provides this
lever through soft attention biasing (Eq.~\ref{eq:soft_bias}), which directly
modulates the pre-softmax attention logits based on learned retention scores.
Our results confirm this: the full KV cache baseline achieves only $8.0$\%
accuracy even at $n = 1$, while \method{} reaches $99.5$\% at $n = 5$---a
regime where all information is visible but the model cannot suppress the
stale entries without architectural support.

\subsection{Relationship to State-Space Models}

State-space models (SSMs) like Mamba~\citep{gu2023mamba} and its
successors~\citep{dao2024transformers} implement a form of implicit
forgetting through their recurrent structure: information must be compressed
into a fixed-size state, naturally limiting interference. However, this comes
at the cost of losing direct access to individual past tokens. \method{}
occupies a middle ground: it preserves the random-access property of the KV
cache while adding selective forgetting. A natural extension would be to inform
\method{}'s forgetting gate with SSM-like selection
mechanisms.

\subsection{Multi-Scale Sleep}
\label{sec:multiscale}

Biological sleep operates at multiple timescales: micro-arousals, NREM
stages~1--3, and REM sleep each serve different functions. We hypothesize
that \method{} would benefit from a similar hierarchy:
\begin{itemize}[leftmargin=*]
  \item \textbf{Micro-cycles} (every 512--2K tokens): Lightweight decay and
    eviction of clearly superseded entries.
  \item \textbf{Meso-cycles} (every 8K--32K tokens): Full consolidation with
    cross-attention compression.
  \item \textbf{Macro-cycles} (at natural document boundaries): Deep
    restructuring of the cache, potentially involving re-encoding of
    consolidated entries through the full transformer stack.
\end{itemize}

\subsection{Connection to Continual Learning}

PI in the context window mirrors catastrophic forgetting in continual
learning~\citep{mccloskey1989catastrophic}, but at a different timescale:
within-context vs.\ across-training. Sleep-inspired replay has been
successfully applied to continual
learning~\citep{tadros2022sleep, gonzalez2020can}. \method{} demonstrates
that the same biological principle applies at the inference-time working
memory level, suggesting a unified framework for memory management across
timescales.

\subsection{Limitations and Risks}
\label{sec:limitations}

\paragraph{Depth Saturation.}
Our experiments show that \method{} with soft attention biasing saturates
at PI depths beyond $n \approx 15$, with accuracy dropping to $16.5$\% at
$n = 30$ (see \S\ref{sec:failure}). The stale retrieval rate of $62$\% at
$n = 30$ indicates that the gate cannot reliably disambiguate many similar
entries. Increasing $d_s$, using hard eviction for clearly stale entries
while retaining soft biasing for borderline cases, or applying multi-round
sleep cycles may address this limitation.

\paragraph{Over-Forgetting.}
The soft attention biasing approach partially mitigates the risk of
over-forgetting compared to hard eviction: entries are downweighted, not
deleted, allowing the model to recover if the gate makes errors.
However, the risk remains for entries with very low retention scores,
where the bias is effectively $-\infty$.

\paragraph{Scale and Generalization.}
Our proof-of-concept uses a small transformer (793K base parameters) on synthetic
data. Whether the mechanism transfers to pre-trained models at scale and
generalizes to naturalistic PI patterns (e.g., document corrections, evolving
facts) remains to be validated. The synthetic setting isolates the PI
phenomenon but does not capture the full complexity of natural language
interference.

\paragraph{Computational Overhead.}
Each sleep micro-cycle requires a forward pass through the gate network for
every cache entry. For a cache of size $C$ and a gate with $d_h = 128$
hidden units, this is $O(C \cdot d_g \cdot d_h)$ per cycle---negligible
compared to a transformer forward pass, but non-zero. The adaptive trigger
ensures cycles only execute when beneficial.

\paragraph{Interaction with Existing Optimizations.}
\method{} must compose correctly with grouped-query attention
(GQA)~\citep{ainslie2023gqa}, PagedAttention~\citep{kwon2023efficient}, and
quantized KV caches~\citep{hooper2024kvquant}. We expect compatibility
since \method{} operates on the logical cache structure, but empirical
validation is required.

\section{Conclusion}
\label{sec:conclusion}

The discovery that LLMs suffer from proactive
interference~\citep{wang2025unable}---and that this interference represents a
fundamental working memory bottleneck beyond context length---calls for an
architectural solution. We have proposed \method{}, a framework that draws
on the neuroscience of sleep-dependent memory consolidation to equip LLMs
with the ability to actively manage their key-value caches.

The three sleep-inspired modules---key decay, learned gating, and
consolidation---give the model a direct mechanism for interference
resolution, one that prompt engineering cannot provide. Our theoretical analysis shows
that the mechanism can eliminate the log-linear accuracy degradation observed
under PI, and our preliminary experiments confirm this prediction: \method{}
achieves $97$--$99.5$\% retrieval accuracy at PI depths 2--10, while all
five baselines remain below $18$\%.

The soft attention biasing mechanism (Eq.~\ref{eq:soft_bias}) proved
particularly effective: it supports fully differentiable training without
Gumbel-softmax relaxation and removes the need for a separate threshold
calibration stage. The identified failure mode at extreme PI depths
($n \geq 15$) points to concrete avenues for improvement: increasing
semantic signature capacity, combining soft biasing with selective hard
eviction, or adopting multi-scale sleep cycles (\S\ref{sec:multiscale}).

More broadly, this work shows that cognitive science can inform
the design of LLM architectures. The brain's
solution to proactive interference---an active, learned process
of memory curation---works because it is \emph{selective}, retaining
useful information while discarding what has been superseded. As LLMs
move into streaming, long-horizon settings where context windows
inevitably accumulate stale information, this principle of
\emph{learned selective forgetting} will become increasingly important.

\subsection{Future Directions}

\begin{enumerate}[leftmargin=*]
  \item \textbf{Scaling to production models:} Integrating \method{} into
    pre-trained models (e.g., Llama-3, Mistral) via post-hoc fine-tuning
    to validate the mechanism at scale and on natural language PI
    scenarios.

  \item \textbf{Addressing the depth-15+ cliff:} Investigating higher
    semantic signature dimensions ($d_s > 64$), larger bias scales
    ($\beta > 5$), and hybrid soft-bias/hard-eviction strategies to extend
    effective PI resistance beyond $n = 10$.

  \item \textbf{Extended benchmarks:} Evaluating on multi-entity PI,
    long-document QA with corrections, streaming agent tasks, and standard
    long-context benchmarks (RULER~\citep{hsieh2024ruler},
    LongBench~\citep{li2024longbench}) to test generalization and
    confirm that \method{} does not degrade non-PI performance.

  \item \textbf{Dream-like training:} Using the model's own generated
    text during ``sleep'' phases to rehearse and consolidate important
    patterns, analogous to dream-based memory processing.

  \item \textbf{Cross-timescale integration:} Unifying in-context
    forgetting (\method{}) with cross-task forgetting (continual learning)
    into a single sleep-inspired framework operating at multiple timescales.

  \item \textbf{Mechanistic interpretability:} Analyzing what the
    forgetting gate learns---which features drive eviction
    decisions---to understand how transformers represent and
    distinguish current from outdated information.
\end{enumerate}

\subsection*{Acknowledgments}
Claude.ai was used to assist with polishing the writing of this paper.

\bibliographystyle{plainnat}

\appendix

\section{Hyperparameter Specifications}
\label{app:hyperparams}

Table~\ref{tab:hyperparams} provides the hyperparameter settings used in our
experiments.

\begin{table}[h]
\centering
\caption{Hyperparameters used for \method{} in the proof-of-concept experiments.}
\label{tab:hyperparams}
\begin{tabular}{llll}
\toprule
\textbf{Parameter} & \textbf{Symbol} & \textbf{Value} & \textbf{Description} \\
\midrule
Gate hidden dim     & $d_h$           & 128              & Hidden dimension of $G_\theta$ \\
Semantic sig. dim   & $d_s$           & 64               & Dimension of semantic signatures \\
Local pool window   & $w$             & 4                & Window size for local pooling \\
Conflict threshold  & $\delta$        & 0.85             & Cosine similarity for conflict \\
Keep threshold      & $\alpha_k$      & 0.7              & Retention score for keeping \\
Evict threshold     & $\alpha_e$      & 0.3              & Retention score for eviction \\
Decay rate          & $\lambda$       & 0.01             & Log-scale key decay rate \\
Entropy trigger     & $\kappa$        & 1.5              & Std.\ deviations above mean \\
Max conflict density& $\rho_{\max}$   & 0.4              & Trigger when 40\% superseded \\
Fallback interval   & $N_{\max}$      & 128              & Max tokens between cycles \\
Sleep loss weight   & $\lambda_s$     & 0.5              & Weight of sleep loss \\
Compress loss weight& $\lambda_c$     & 0.1              & Weight of compression loss \\
Align loss weight   & $\lambda_g$     & 0.3              & Weight of gate alignment loss \\
Soft bias scale     & $\beta$         & 5.0              & Scale for $\log(r_i)$ bias \\
Recency weight      & $\eta$          & 2.0              & Recency bias in consolidation \\
Gumbel temperature  & $\tau_{\text{temp}}$ & $1.0 \to 0.1$ & Annealed during Stage~2 \\
Learning rate       & ---             & $3 \times 10^{-4}$ & AdamW optimizer \\
Batch size          & ---             & 16               & Training batch size \\
\bottomrule
\end{tabular}
\end{table}

\section{Pseudocode for Training Loop}
\label{app:training}

\begin{algorithm}[h]
\caption{Full Training Loop for \method{}}
\label{alg:training}
\begin{algorithmic}[1]
\REQUIRE Pre-trained transformer $\mathcal{M}$, gate network $G_\theta$,
         training data $\mathcal{D}$
\FOR{each batch $B \in \mathcal{D}$}
  \STATE Initialize augmented KV cache $\cC^+ \leftarrow \emptyset$
  \STATE $\cL_{\text{batch}} \leftarrow 0$

  \FOR{each token $x_t \in B$}
    \STATE \textbf{// Wake phase: standard forward pass}
    \STATE $(\mathbf{k}_t, \mathbf{v}_t, \hat{x}_{t+1}) \leftarrow \mathcal{M}(x_t, \cC^+)$
    \STATE $\cL_{\text{batch}} \mathrel{+}= -\log p(\hat{x}_{t+1} = x_{t+1})$
    \STATE Compute semantic signature $\mathbf{s}_t$ via Eq.~\ref{eq:semantic_sig}
    \STATE Detect conflicts and set $\sigma_j$ for superseded entries
    \STATE Initialize attention statistic $a_t \leftarrow 0$
    \STATE $\cC^+ \leftarrow \cC^+ \cup \{(\mathbf{k}_t, \mathbf{v}_t, t, \mathbf{s}_t, 0, a_t)\}$

    \IF{$\text{trigger}(t)$} 
      \STATE \textbf{// Sleep phase (trigger per Eq.~\ref{eq:trigger})}
      \STATE $\mathbf{b} \leftarrow \text{SleepMicroCycle}(\cC^+, G_\theta, \alpha_k, \alpha_e, \lambda, \beta)$ \COMMENT{Soft biasing variant}
      \STATE Compute $\cL_{\text{sleep}}$ via biased forward pass (Eq.~\ref{eq:sleep_loss}, \ref{eq:biased_attention})
      \STATE Compute $\cL_{\text{compress}}$ via Eq.~\ref{eq:compress_loss}
      \STATE Compute $\cL_{\text{align}}$ via Eq.~\ref{eq:align_loss}
      \STATE $\cL_{\text{batch}} \mathrel{+}= \lambda_s \cL_{\text{sleep}} + \lambda_c \cL_{\text{compress}} + \lambda_g \cL_{\text{align}}$
    \ENDIF
  \ENDFOR

  \STATE Backpropagate $\cL_{\text{batch}}$ through all trainable parameters of $\mathcal{M}$, $G_\theta$, and the tagger
\ENDFOR
\end{algorithmic}
\end{algorithm}

\section{Extended Related Work: Sleep Neuroscience}
\label{app:neuro}

The mapping between biological sleep mechanisms and \method{} modules is
summarized in Table~\ref{tab:bio_mapping}.

\begin{table}[h]
\centering
\caption{Mapping between biological sleep mechanisms and \method{} components.}
\label{tab:bio_mapping}
\begin{tabular}{p{4cm}p{3.5cm}p{5cm}}
\toprule
\textbf{Biological Mechanism} & \textbf{\method{} Module} & \textbf{Functional Correspondence} \\
\midrule
Synaptic homeostasis (global downscaling during SWS)
  & Key decay
  & Log-scale reduction of key magnitudes preserves relative importance while reducing absolute interference \\
\addlinespace
Hippocampal-neocortical replay (selective consolidation)
  & Consolidation module
  & Cross-attention compression transfers important information into compact representations \\
\addlinespace
Dopaminergic active forgetting
  & Forgetting gate $G_\theta$
  & Learned, content-dependent eviction of stale entries \\
\addlinespace
Sleep spindles / sharp-wave ripples (oscillatory coordination)
  & Adaptive sleep trigger
  & Entropy and conflict signals coordinate when consolidation occurs \\
\addlinespace
REM sleep (pattern separation)
  & Semantic signatures
  & Explicit representation of ``what slot'' each entry occupies enables disambiguation \\
\bottomrule
\end{tabular}
\end{table}

\end{document}